\newcommand{\disablepackage}[2]{%
  \disable@package@load{#1}{#2}%
}
\newcommand{\reenablepackage}[1]{%
  \reenable@package@load{#1}%
}
\newtheorem{theorem}{Theorem}
\newtheorem{proposition}{Proposition}
\newtheorem{corollary}{Corollary}
\newcommand{\R}{\mathbb{R}}
\newcommand{\mb}[1]{\mathbf{#1}}
\newcommand{\diag}{\mathrm{\mathbf{diag}}}
\newcommand{\minimize}[1]{ \ensuremath{\underset{#1}{\mathrm{minimize}}\ }}
\newcommand*{\subj}{\ensuremath{\mathrm{subject\ to\ }}}
\newcommand{\mbb}[1]{\mathbb{#1}}
\newcommand{\bzero}{\mathbf{0}}
\newcommand{\oldnormaux}[3]{\mathpalette\oldnormaux@i{{#1}{#2}{#3}}}
\newcommand{\oldnormaux@i}[2]{\oldnormaux@ii#1#2}
\newcommand{\oldnormaux@ii}[4]{%
  \sbox\z@{$\m@th#1#2#4#3$}%
  \sbox\tw@{$\m@th\|$}%
  \mathopen{\hbox to\wd\tw@{\hss\vrule height \ht\z@ depth \dp\z@ width .2\wd\tw@\hss}}%
  #4
  \mathclose{\hbox to\wd\tw@{\hss\vrule height \ht\z@ depth \dp\z@ width .2\wd\tw@\hss}}%
}
\newcommand{\longdash}[1][2em]{%
  \makebox[#1]{$\m@th\smash-\mkern-7mu\cleaders\hbox{$\mkern-2mu\smash-\mkern-2mu$}\hfill\mkern-7mu\smash-$}}
\newcommand{\omitskip}{\kern-\arraycolsep}
\newcommand{\p}{\mathbf{p}}
\newcommand{\vel}{\mathbf{v}}
\newcommand{\accel}{\mathbf{a}}
\newcommand{\yhat}{\hat{y}}
\newcommand{\yopthat}{\hat{y}^*}
\title{A Control Barrier Function for Safe Navigation with Online Gaussian Splatting Maps}
\author{Timothy Chen$^{1*}$, Aiden Swann$^{1*}$, Javier Yu$^{1*}$, Ola Shorinwa$^{1}$, Riku Murai$^{2}$,\\ Monroe Kennedy III$^{1}$, Mac Schwager$^{1}$
\thanks{$^{*}$ The co-first authors contributed equally.}%
\thanks{$^{1}$ Stanford University,~Stanford,~CA, USA.}%
\thanks{$^{2}$ Imperial College London, London, UK.}%
\thanks{Corresponding author: \tt\small {chengine@stanford.edu}.} %
\thanks{T. Chen was supported by a NASA NSTGRO fellowship and A. Swann was supported by NSF GRFP Fellowship No. DGE-2146755. This work was also supported in part by ONR grant N00014-23-1-2354, NSF FRR grant 2342246, and MIT Lincoln Labs grant 7000603941. Toyota Research Institute provided funds to support this work.}
}
\begin{document}

\maketitle
\begin{abstract}
SAFER-Splat (\underline{S}imultaneous \underline{A}ction \underline{F}iltering and \underline{E}nvironment \underline{R}econstruction) is a real-time, scalable, and minimally invasive safety filter, based on control barrier functions, for safe robotic navigation in a detailed map constructed at runtime using Gaussian Splatting (GSplat). We propose a novel Control Barrier Function (CBF) that not only induces safety with respect to all Gaussian primitives in the scene, but when synthesized into a controller, is capable of processing hundreds of thousands of Gaussians while maintaining a minimal memory footprint and operating at 15 Hz during online Splat training. Of the total compute time, a small fraction of it consumes GPU resources, enabling uninterrupted training. The safety layer is minimally invasive, correcting robot actions only when they are unsafe. To showcase the safety filter, we also introduce SplatBridge, an open-source software package built with ROS for real-time GSplat mapping for robots. We demonstrate the safety and robustness of our pipeline first in simulation, where our method is 20-50x faster, safer, and less conservative than competing methods based on neural radiance fields. Further, we demonstrate simultaneous GSplat mapping and safety filtering on a drone hardware platform using only on-board perception. We verify that under teleoperation a human pilot cannot invoke a collision. 
Our videos and codebase can be found  
at \url{https://chengine.github.io/safer-splat}.

\end{abstract}

\IEEEpeerreviewmaketitle



\section{Introduction}
\label{sec:intro}

\begin{figure}
    \centering
    \includegraphics[trim={0cm 10cm 31cm 0.5cm}, width=\columnwidth]{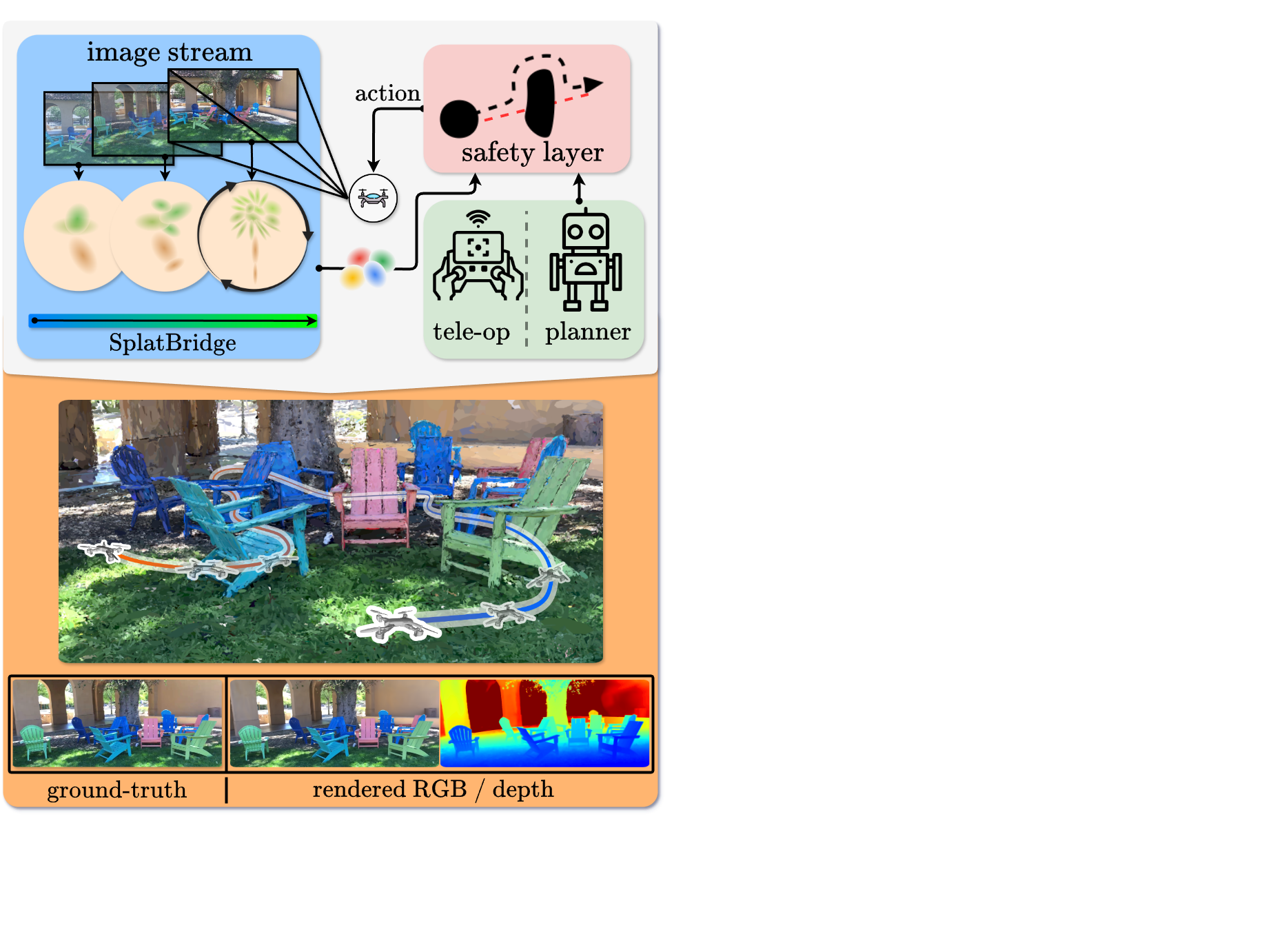}
    \caption{SAFER-Splat is a safety layer for online robotic GSplat mapping.}
    \vspace{-1em}
    \label{fig:title}
\end{figure}

Spatial understanding is an essential component of a robot's autonomy stack, and for many robotic platforms it is achieved through online mapping using the robot's on-board sensor suite. 
Mapping in robotics has traditionally been built on point-cloud and voxel-based scene representations \cite{ryde20103d, kim2018slam, palieri2020locus, jatavallabhula2023conceptfusion}. Although these representations can be designed to be lightweight, the sparsity of these representations often limits the resolution and fidelity of the resulting map. Recently, radiance fields, such as Neural Radiance Fields (NeRFs) \cite{mildenhall2020a, barron2021mip} and Gaussian Splatting (GSplat) \cite{kerbl3Dgaussians, yu2024mip, swann2024touch}, have emerged as photorealistic, high-fidelity scene reconstruction methods built off the same input modalities (i.e., RGB or RGB-D images) as their classical counterparts. In contrast to NeRFs, GSplats represent the environment using explicit, ellipsoidal primitives, lending itself to efficient, dynamic robotic manipulation and navigation algorithms \cite{lu2024manigaussian, zheng2024gaussiangrasper, shorinwa2024splat, abou2024physically, chen2024splatnav}, while offering faster training and rendering speeds than NeRFs. In this work, we propose a Control Barrier Function (CBF) safety filter for robot platforms which leverages a GSplat's simple geometric primitives for efficient computation of certifiably safe control. SAFER-Splat is a low-level safety filter, meaning it modifies higher-level commands, from a human pilot through teleop, or through a higher-level planner, to deliver collision-free actions with minimal deviation from the action intended by the higher level command.

Although algorithms for safe robotic control have been developed for a broad range of map representations, e.g., \cite{diaz2016indoor, gao2019flying, adamkiewicz2022vision, chen2023catnips, chen2024splatnav}, these existing methods either require a pre-built map, making them unsuitable for online scenarios, or require stringent assumptions on the robot's dynamics, sensing modalities, or its nominal controller. 
SAFER-Splat leverages a novel CBF, closely integrated with the GSplat representation, to derive a lightweight, minimally-invasive safety filter. To synthesize a controller, we embed the CBF safety constraint into a quadratic program, minimizing the deviation between the desired and actuated control effort. By pruning the CBF constraints to identify a minimal number of constraints, our method scales efficiently to run in real-time with hundred of thousands ellipsoidal primitives in the scene. Together, the safety guarantees, computational efficiency, and scalability of SAFER-Splat enable its superior performance in collision-free robot control in online GSplat scenes.  To showcase SAFER-Splat in a full autonomy stack, we also introduce SplatBridge, a ROS bridge for real-time, online GSplat training for robots.

In simulation, we demonstrate that SAFER-Splat outperforms other radiance field-based controllers \cite{tong2023enforcing}. We show that the proposed method is more than an order of magnitude faster, not sporadic in performance, and is consistently safer.  
In real-world experiments with a teleoperated drone, we validate that SAFER-Splat, using SplatBridge as its online GSplat module, with only data from on-board perception, successfully produces collision-free robot motion even when the operator attempts to force collisions with obstacles. While we demonstrate our CBF safety filter with SplatBridge, the concept is modular, and can be composed with other GSplat SLAM techniques (e.g., \cite{yan2023gs}) to give real-time, photo-realistic scene reconstruction and collision-free robot motion.

The contributions of SAFER-Splat can be summarized as: (1) a tractable CBF-based safety filter for safe robot navigation in GSplat maps and (2) SplatBridge, a real-time ROS bridge for training GSplats online on robot platforms.

\section{Related Work}
\label{sec:related}


\smallskip
\noindent\textbf{Radiance Fields and SLAM.} First introduced in \cite{mildenhall2020a}, Neural Radiance Fields demonstrated the photorealistic novel view rendering capability of radiance field scene representations, where the environment is represented by spatial density and color fields, parameterized by multi-layer perceptrons.
In addition to detailed scene color modeling, NeRFs produce high-fidelity, dense geometric scene reconstructions making them a potential alternative map representation for SLAM pipelines. 
Prior work \cite{sucarIMAPImplicitMapping2021, nice-slam} simultaneously optimize the NeRF network weights and the robot/camera poses. Meanwhile, NeRF-SLAM \cite{nerf-slam} and NerfBridge \cite{yu2023nerfbridge} combine existing visual odometry with online NeRF training for real-time 3D scene reconstruction.

GSplat \cite{kerbl3Dgaussians}, a successor to NeRFs, retains a similar philosophy on the parametric modeling of scene radiance fields. However, rather than using an MLP-based parametric model, GSplat represents the scene using ellipsoidal primitives with a tile-based rasterization technique for differentiable rendering. Compared to NeRFs, GSplat provides superior training speed and reconstruction accuracy. GSplat has also been integrated into SLAM pipelines \cite{yan2023gs, yugay2023gaussian, matsuki2023gaussian, keetha2024splatam}. Specifically, we integrate our safety layer with SplatBridge, a GSplat extension of NerfBridge.  



\smallskip
\noindent\textbf{Control Barrier Functions.} CBFs \cite{ames2017cbf} have become a widely utilized tool to synthesize safe controllers for robotic systems. CBFs provide guaranteed safety by ensuring forward set invariance. Typically CBFs are used in unison with a higher-level goal-oriented planner, or a human tele-operator \cite{ames2019cbf}.  In this form, CBFs act as a non-invasive \textit{safety filter} \cite{singletary2022cbf, singletary2022timecbf}. 


Some works have addressed the use of CBFs to provide safety active sensing of an environment. In \cite{sreenath2024, bolunPC}, CBFs are formulated for avoiding robot body frame point clouds from depth or LIDAR inputs. These methods do not build a map of the environment and therefore can only avoid obstacles which are currently in view of the sensors.  In \cite{Zhou_Papatheodorou_Leutenegger_Schoellig_2024, raja2024safecontrolusingoccupancy}, CBFs are created for occupancy grid based environment maps. While these methods maintain a map of the environment, they lack the scalability and expressiveness of GSplat representations.



\smallskip
\noindent\textbf{Planning and Control in Radiance Fields.} Using NeRFs as an underlying scene representation, NeRF-Nav \cite{adamkiewicz2022vision} plans trajectories for differentially flat robots, minimizing a collision cost through gradient descent, but has no safety guarantees and is slow to converge. CATNIPS \cite{chen2023catnips} converts the NeRF into a probabilistic voxel grid to plan safe paths parameterized as B\'ezier curves. Most structurally similar to our work, NeRF-CBF \cite{tong2023enforcing} uses NICE-SLAM \cite{nice-slam} to train a NeRF and uses the rendered depth map at sampled poses to enforce step-wise safety using a CBF. It requires a pre-trained NICE-SLAM map, due to the slow execution time of online NICE-SLAM. Furthermore, the discrete CBF has weak safety guarantees, and combined with slow NeRF rendering, introduces scalability issues. SAFER-Splat addresses these limitations by performing online mapping using SplatBridge and is amenable to continuous-time dynamics models.

To our knowledge, Splat-Nav \cite{chen2024splatnav} is the only work to propose a planning algorithm for a robot using a GSplat map, although we note that prior work on mapping with Gaussian Mixture Models exists \cite{goel2021rapid, goel2023probabilistic}. 
Splat-Nav converts the GSplat representation into a union of ellipsoids and performs rigorous collision checking to generate smooth, safe trajectories. Similarly, we leverage the ellipsoidal interpretation of GSplats to demonstrate safety of our safety filter; however, our safety filter is meant to intervene at the low level feedback control layer, whereas Splat-Nav provides a high-level trajectory planning layer. 


\section{3D Gaussian Splatting}
\label{sec:gsplat}


Gaussian Splatting (GSplat), recently introduced in \cite{kerbl3Dgaussians}, provides a photorealistic $3$D scene representation of an environment using $3$D ellipsoids (Gaussians) as the underlying geometric primitives. Each ellipsoid is assigned spatial and geometric attributes given by a mean ${\mu \in \mbb{R}^{3}}$ and covariance matrix ${\Sigma \in \mbb{S}_{++}}$, in addition to visual attributes given by opacity ${\alpha \in [0, 1]}$, and spherical harmonics (SH) coefficients for view-dependent visual effects. The entire representation (including all the attributes of all ellipsoids) is seeded from sparse point-cloud(s) of the environment, and optimized using a photometric error from monocular images. These point-clouds can be retrieved from structure-from-motion \cite{schoenberger2016sfm} or derived from robot odometry and sensors in online mapping. 


GSplat offers faster rendering and training times compared to NeRF-based methods, and more importantly in our use-case, provides more accurate collision geometry, as shown in \cite{chen2024splatnav}. To convert the GSplat into an interpretable representation, we leverage the fact that the minimal geometry of the Splat is the $99\%$ confidence ellipsoid for every Gaussian \cite{kerbl3Dgaussians, chen2024splatnav}. By maintaining a positive distance to this representation, a robot is guaranteed to maintain safety with respect to the geometry of the GSplat.
%
Although the resulting collision geometry from GSplat is relatively accurate, we note that a gap still exists a small gap between the ground-truth geometry and that extracted from the GSplat. We compensate for this mismatch when designing our CBF-based controller, for improved robustness.

We now formally present our definition of the scene representation which will be used by the safety layer. A scene consists of a union of $K$ ellipsoids comprising the GSplat $\mathcal{G} = \{\mathcal{E}_i\}_{i=1}^K$, where each ellipsoid $\mathcal{E}_i$ satisfies the standard form: ${\mathcal{E}_i = \{x\in \mathbb{R}^3 | (x - \mu_i)^T \Sigma_i^{-1} (x-\mu_i) \leq 1\}}$
%
for means $\mu_i$ and covariance $\Sigma_i$.
We decompose the covariances based on the rotation and scaling parameters from the GSplat formulation: $\Sigma_i = R \bar{S} \bar{S}^T R^T,$
%
where $\bar{S} = \sqrt{\chi_3^2(\gamma)} S$ represents the scales of Gaussian primitive $i$ multiplied by the $\gamma\%$th percentile of the chi-squared distribution. For this work, we use the $1\sigma$ ellipsoid with semi-major axes denoted as $S$.


\section{Control Barrier Functions}
\label{sec:cbf}

We provide a brief introduction to control barrier functions (CBFs) for double-integrator systems and refer interested readers to \cite{ames2019cbf} for a more detailed discussion of CBFs. CBFs frame safety guarantees in the lens of forward invariance. Given a CBF candidate ${h(x): \mathbb{R}^n \to \mathbb{R}}$ of the robot's state $x \in \mathcal{D}$, we define the safe set as the \mbox{${0}$-superlevel} set ${\Omega = \{ x \in \mathcal{D}\; | \; h(x) \geq 0\}}$. The $\mathrm{int}(\Omega)$ is defined where $h(x)$ is strictly positive, whereas the boundary $\partial \Omega$ is precisely where $h(x)$ is 0. In order for a safety algorithm to be safe, we must remain in $\Omega$ for all time $t\in\mathbb{R}_+$. The forward invariance can be formalized through the condition \cite{wieland2007constructive} (an extension of Nagumo's Theorem \cite{nagumo1942lage}): 
${\dot{h}(x(t)) \geq -\alpha(h(x))}$,
where $\alpha$ is any class-$\kappa$ extended function.
Intuitively, this constraint implies that the CBF candidate is allowed to decrease at an arbitrarily high rate until it reaches $\partial S$, at which point it can no longer be negative and become unsafe. Conversely, the constraint is also defined for unsafe behavior, in which the CBF is designed to increase $h(x)$ back into $\Omega$. 

In order to synthesize a controller, we 
first define the continuous-time control-affine dynamics: ${\dot{x} = f(x) + g(x) u,}$
where state transition function ${f(x): \mathcal{D}\to\mathbb{R}^n}$, actuation matrix ${g(x): \mathcal{D}\to\mathbb{R}^{n\times m}}$, and control effort $u \in \mathbb{R}^m$. 

We can define a formal CBF constraint for double-integrator systems: ${\forall x\in \mathcal{D}, \; \exists \; u:}$
\begin{equation}
\label{eq:cbf_constraint_all_degree2}
\left[ \left(\mathcal{L}_f\mathcal{L}_g h \right)^T u \geq -\mathcal{L}_f^2 h - \alpha(\mathcal{L}_f h) - \beta(\mathcal{L}_fh + \alpha(h))\right],
\end{equation}
where $\alpha, \beta > 0$ are positive selected constants. \cref{eq:cbf_constraint_all_degree2} is affine with respect to the decision variable $u$. To synthesize a controller, we embed the CBF constraint into a quadratic program of the form:
\begin{equation}
    \label{opt:cbf_qp}
    \begin{split}
    &\minimize{u} ||u - \bar{u}||_2^2\\
    &\subj \cref{eq:cbf_constraint_all_degree2},
    \end{split}
\end{equation}
which can be solved quickly to global optimality and $\bar{u}$ denotes the desired control derived from an external source.

\section{Collision Avoidance in GSplat Maps}
\label{sec:method}


\smallskip

%
\noindent \textbf{CBF Formulation.} We propose a Euclidean distance metric, the minimum distance between a spherical robot body (centered at position $\p$ with radius $r$) and an ellipsoid, in contrast to a Mahalanobis distance metric, noting that non-Euclidean distance metrics generally introduce additional computational bottlenecks. Later in this section, we extend the formulation to ellipsoid-ellipsoid inter-distances. 
Because distances are invariant to translations and rotations, we can convert the sphere-to-ellipsoid problem to a zero-mean, ellipsoid aligned frame, with the center of the ellipsoid at the origin and the primary axes oriented in the $+x, +y, +z$ directions from largest to smallest. Moreover, any reflections across the primary planes ($xy, yz, zx$) preserve distances due to symmetry of the ellipsoid. We denote variables in this translated, rotated, and reflected frame with a caret (\^{}). 

The distance between the $i$th ellipsoid and the robot centroid $\hat{\p} = F R^T(\p - \mu_i)$ can be posed as the quadratic program:
\begin{equation}
\begin{split}
\label{opt:sphere-to-ellipsoid-primal-simple}
    d_i(\hat{\p}) = & \;\minimize{\yhat} ||\hat{\p} - \yhat||_2^2\\
    &\subj \yhat^T S^{-2} \yhat \leq 1,
\end{split}
\end{equation}
where $F$ is the reflection matrix, a diagonal matrix of positive or negative 1 used to reflect points across the primary planes. The inclusion of such a matrix will be apparent when we discuss how to solve this program. We note that this program with one constraint always has a feasible solution.

\smallskip
\noindent \textbf{CBF Solution.}
Instead of solving the optimization through a solver, which is typically not scalable to millions of Gaussians on CPU and often does not provide derivatives, we choose to numerically solve the dual objective
\begin{equation}
\label{eq:lagrangian-simple}
    \max_{\lambda} \min_{\yhat} \underbrace{||\hat{\p} - \yhat||_2^2 + \lambda(\yhat^T S^{-2} \yhat - 1)}_{L(\yhat, \lambda; \hat{\p})}.
\end{equation}
Considering the case where the robot is in the safe set, the optimal variables (denoted with $*$) must satisfy first-order optimality conditions:
\begin{equation}
    \label{eq:first-order-optimality-simple}
    \begin{split}
    \frac{\partial}{\partial \lambda} L(\yhat^{\star}, \lambda^{\star}; \hat{\p}) &= \yhat^{\star T} S^{-2} \yhat^{\star} - 1 = 0 \\
    \frac{\partial}{\partial \yhat} L(\yhat^{\star}, \lambda^{\star}; \hat{\p}) &= 
    2(\lambda^{\star} S^{-2} + \mb{I})\yhat^{\star} - 2\hat{\p} = 0. \\
    \end{split}
\end{equation}
from which we can compute $\yopthat$. For scenarios in which the robot is penetrating the ellipsoid, solving these conditions will yield a non-zero penetration distance, which is crucial for a well-behaved CBF.
The optimality conditions are:
\begin{equation}
    \label{eq:first-order-optimality-direct}
\begin{split}
    \yopthat(\lambda, \hat{\p}) = & \begin{pmatrix}..., \frac{S_{ii}^2\hat{\p}_i}{\lambda + S_{ii}^2}, ... \end{pmatrix}^T\\
        q(\lambda, \hat{\p}) = &\sum_{i=1}^n \frac{y_i^{*2}}{S_{ii}^2} - 1 = \sum_{i=1}^n \frac{S_{ii}^2 \hat{\p}_i^2}{(\lambda + S_{ii}^2)^2} - 1 = 0,
    \end{split}
\end{equation}
where $\hat{\p}_i \geq \bzero$ for some reflection matrix $F$.

Note that the summation in \cref{eq:first-order-optimality-direct} is convex and monotonically decreasing (from $+\infty$ tending to 0) 
with respect to $\lambda$ for $\lambda \geq \max_i \{-S^2_{ii}\}$. Since $\hat{\p}$ is in the positive orthant, $y^*$ must also be in the positive orthant. 
Because the expression is monotonic and occupies the range $(0, \infty)$, the equation always has exactly one solution in the feasible range of $\lambda$, and can be solved to arbitrary precision using a bisection algorithm \cite{eberlyGeometricTools2013}, parallelized on GPU.

\begin{figure}
    \centering
    \includegraphics[trim={0cm 3cm 0cm 1.4cm}, width=\columnwidth]{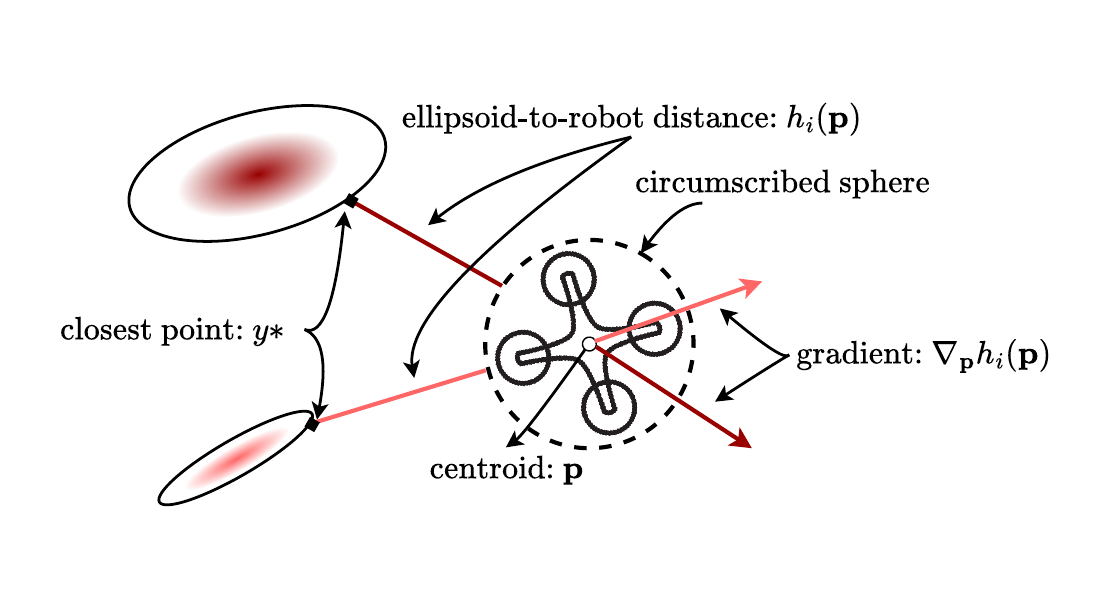}
    \caption{The CBF is posed as an optimization problem, minimizing the distance between the robot $\p$ and some point $y^*$ on the ellipsoid. We solve the robot-ellipsoid distance program through a bisection search, and return both its gradient and Hessian.}
    \label{fig:cbf-explainer}
    \vspace{-1em}
\end{figure}

    We define the robot-to-ellipsoid CBF as:
    \begin{equation}
        \label{eq:our-cbf}
        \begin{split}
        h_i(\p) &= \phi_i(\hat{\p}) d_i(\hat{\p}) - (r + \epsilon)^2\\
        \phi_i(\hat{\p}) &= \textbf{sign}(\hat{\p}^T S^{-2} \hat{\p} - 1),
    \end{split}
    \end{equation}
    which represents a tractable CBF candidate for spherical robots with radius $r$ and desired buffer distance $\epsilon$. $\gamma$ and $\epsilon$ encapsulate uncertainty in the 3D reconstruction process and in the position estimate of the robot, which should be selected based on domain-specific knowledge. Note that for $\epsilon = 0$, $h_i(\p)$ will be positive in free space, 0 when the robot body just intersects the ellipsoid, and negative when penetrating the ellipsoid.
    
    Solving \ref{eq:cbf_constraint_all_degree2} requires the derivatives of the CBF. By the Envelope theorem \cite{silberberg1999viner}, we can treat the optimized variables as constants when taking the first derivative. Hence,
    \begin{equation}
        \label{eq:cbf-grad}
        \begin{split}
        \nabla_\p h_i(\p) 
        = \phi_i \frac{\partial L}{\partial \hat{\p}} \frac{\partial \hat{\p}}{\partial \p}
        = 2\phi_i (\p - y^*)^T,
        \end{split}
    \end{equation}
    where $y^*$ is the closest ellipsoid point in the original frame.
    
    To compute the Hessian, we make use of the implicit function theorem, which states that partial derivatives of two variables related implicitly by an equation $q(\lambda, \hat{\p}) = 0$ can be expressed as: ${\frac{\partial \lambda}{\partial \hat{\p}} = -\left[\frac{\partial q}{\partial \lambda}\right]^{-1} \frac{\partial q}{\partial \hat{\p}}.}$
    %
    This theorem allows the expression of the Hessian as ${\nabla_\p^2 h_i(p) = \mathbf{H}_i}$, where:
    \begin{equation}
    \label{eq:hessian}
        \begin{split}
            \mathbf{H}_i &= 2\phi_i \left[ \mathbf{I} - \diag(\frac{S_{ii}^2}{\lambda + S_{ii}^2}) + \frac{\partial \lambda}{\partial q} \frac{\partial \yopthat}{\partial \lambda} \frac{\partial q}{\partial \hat{\p}} \right]
        \end{split}
    \end{equation}
    where ${\frac{\partial q}{\partial \lambda} = -2 \sum_{i=1}^n \frac{S_{ii}^2 \hat{\p}_i^2}{(\lambda + S_{ii}^2)^3},}$ ${\frac{\partial \yopthat}{\partial \lambda} = - \left[\frac{S_{ii}^2 \hat{\p}_i}{(\lambda + S_{ii}^2)^2} \right]_{\forall i}^T,}$ and ${\frac{\partial q}{\partial \hat{\p}} = 2\left[\frac{S_{ii}^2 \hat{\p}_i}{(\lambda + S_{ii}^2)^2} \right]_{\forall i}.}$
\cref{fig:cbf-explainer} visualizes the CBF gradients. 

\begin{proposition}
    The sphere-to-ellipsoid CBF (\ref{eq:our-cbf}) can be extended to an ellipsoid-to-ellipsoid CBF, applicable to ellipsoidal robots.
\end{proposition}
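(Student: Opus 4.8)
The plan is to keep the distance-based CBF template of \cref{eq:our-cbf} intact and only enlarge the underlying distance program \cref{opt:sphere-to-ellipsoid-primal-simple} from a single decision point to a pair of points, one on the robot ellipsoid and one on the scene ellipsoid. After rotating and translating into the scene primitive's aligned frame (a rigid motion, so the Euclidean metric is preserved and the scene constraint stays axis-aligned), I would pose
\[ d_i = \minimize{\xhat, \yhat} \norm{\xhat - \yhat}_2^2 \;\subj (\xhat - \hat{\p})^\T \hat{\Sigma}_r^{-1}(\xhat - \hat{\p}) \leq 1, \; \yhat^\T S^{-2}\yhat \leq 1, \]
where $\hat{\Sigma}_r$ is the robot covariance expressed in this frame and $S$ the scene semi-axes. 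This collapses exactly to \cref{opt:sphere-to-ellipsoid-primal-simple} in the spherical limit $\hat{\Sigma}_r \to r^2\mb{I}$, so the proposition amounts to showing that the dual/KKT, bisection, and envelope/implicit-function machinery of \cref{sec:method} all survive the addition of the second constraint.

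First I would form the Lagrangian with two multipliers, $\lambda_r$ and $\lambda_i$, and write the two stationarity conditions. Each gives its optimal point as a resolvent acting on the connecting vector, $\xopthat = (\mb{I} + \lambda_r \hat{\Sigma}_r^{-1})^{-1}(\yopthat + \lambda_r \hat{\Sigma}_r^{-1}\hat{\p})$ and $\yopthat = (\mb{I} + \lambda_i S^{-2})^{-1}\xopthat$, paralleling \cref{eq:first-order-optimality-direct}. Eliminating $\xopthat, \yopthat$ and substituting into the two active-constraint equations leaves a coupled pair of scalar equations $q(\lambda_r, \lambda_i, \hat{\p}) = \bzero$, which I would solve with the same parallelizable root-finder (a nested or Newton bisection) used for the single scalar $\lambda$, preserving GPU throughput.

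Next I would reuse the Envelope theorem: since $\p$ enters only through the robot constraint, differentiating the Lagrangian and applying the $\xhat$-stationarity condition yields $\nabla_\p h_i(\p) = 2\phi_i(\xopthat - \yopthat)^\T$ (up to the fixed orthogonal frame map), which reduces to \cref{eq:cbf-grad} when the robot point coincides with its center. The indicator $\phi_i$ is redefined to report interpenetration of the two ellipsoids rather than of a point and an ellipsoid. Finally, the Hessian follows from the implicit function theorem applied to the coupled system $q(\lambda_r, \lambda_i, \hat{\p}) = \bzero$, differentiating both multipliers jointly and feeding the sensitivities $\partial\lambda_r/\partial\hat{\p}$ and $\partial\lambda_i/\partial\hat{\p}$ through $\xopthat$ and $\yopthat$, generalizing \cref{eq:hessian}.

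The main obstacle I anticipate is the root-finding reduction. In the sphere case the robot point lies trivially on the ray to the ellipsoid, collapsing optimality to a single monotone scalar equation with a guaranteed unique root; the two-ellipsoid system instead couples $\lambda_r$ and $\lambda_i$, so I expect the delicate work to be establishing existence and uniqueness of the stationary solution (and a structure that keeps the bisection well-posed), together with showing that the $2\times 2$ Jacobian block inverted in the implicit function theorem is nonsingular away from $\partial\Omega$.
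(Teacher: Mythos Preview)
Your proposal is sound in principle but takes a substantially harder route than the paper, and leaves the hard step unresolved.

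The paper's argument is a one-line reduction: apply the affine map $A$ (rotation plus anisotropic scaling) that sends the robot ellipsoid to a unit sphere. Ellipsoids are closed under affine maps, so every scene primitive remains an ellipsoid in the new coordinates $\mathcal{O}$, and the sphere-to-ellipsoid CBF of \cref{eq:our-cbf} applies verbatim there; the only bookkeeping is to push the dynamics and the synthesized action through $A$ and $A^{-1}$. Nothing new has to be solved---the single monotone scalar equation, the bisection, the gradient, and the Hessian are all reused unchanged.

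Your plan instead keeps the original Euclidean metric and attacks the genuine two-ellipsoid minimum-distance problem with two constraints and two multipliers. That is a legitimate CBF candidate (same zero level set, differentiable on the safe set), and the envelope/implicit-function steps you sketch are the right tools. But the reduction to a coupled root system $q(\lambda_r,\lambda_i,\hat{\p})=\bzero$ is exactly where the difficulty lives: you lose the monotonicity that made the one-dimensional bisection provably convergent, and you flag but do not establish existence/uniqueness or nonsingularity of the $2\times2$ Jacobian. So as a proof of the proposition your outline is incomplete at the point you yourself identify as the obstacle. The paper's change-of-coordinates trick sidesteps all of this; the price is that the resulting distance (and hence the QP objective) is measured in the transformed metric rather than the ambient Euclidean one, which is immaterial for forward invariance.
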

\begin{proof}
    An ellipsoidal robot in space $\mathcal{W}$ can be converted to a sphere in space $\mathcal{O}$ through a linear transformation $A: \mathcal{W} \to \mathcal{O}$ consisting of a rotation and anisotropic scaling. The set of ellipsoids are closed under linear transformations. Therefore, in $\mathcal{O}$, the sphere-to-ellipsoid CBF (\ref{eq:our-cbf}) can be used. Any action synthesized using dynamics in $\mathcal{O}$ should be mapped back to dynamics in $\mathcal{W}$. 
\end{proof}

\noindent \textbf{Controller Synthesis and Computational Efficiency.}
In this work, we utilize the double-integrator model, which is often used in CBF literature for drone dynamics \cite{sreenath2024, goncalves2024, singletary2021}:
\begin{equation}
\label{eq:double-integrator}
\begin{split}
\dot{x} = \begin{pmatrix}
\dot{\p} \\
\dot{\vel}
\end{pmatrix} = 
 \begin{pmatrix}
\vel \\
\accel
\end{pmatrix} = 
\underbrace{
\begin{pmatrix}
    \bzero \; \mathbf{I} \\
    \bzero \; \bzero
\end{pmatrix}\begin{pmatrix}
    \p \\ \vel 
\end{pmatrix}}_{f} + \underbrace{\begin{pmatrix}
    \bzero \\ \mathbf{I}
\end{pmatrix}}_{g} u,
\end{split}
\end{equation}
where $u$ is the commanded acceleration. 
This assumption is not restrictive, given that drones typically have a lower-level attitude-stabilizing controller that enables the drone to track double-integrator dynamics. Moreover, our CBF can be generalized to a higher-fidelity dynamics models for drones, as demonstrated in \cite{singletary2022cbf, cohen2024}.

%
The CBF constraint for double-integrator dynamics can be formed using \eqref{eq:cbf_constraint_all_degree2}, which can be embedded into a minimally-invasive quadratic program \eqref{opt:cbf_qp}. We prune the set of constraints in \eqref{opt:cbf_qp} for better efficiency, by reasoning about the dynamic feasibility of the system. The redundant constraints are never active, so their exclusion does not compromise safety. This type of pruning yields more savings in denser GSplats.
%



\begin{theorem}
    The quadratic program in \eqref{opt:cbf_qp} is always feasible, i.e., a feasible control input $u$ always exist for the CBF controller \eqref{opt:cbf_qp}, for robots with double-integrator dynamics, provided that ${\p \in \Omega}$. Consequently, the safe set is forward invariant, for the closed-loop system.
\end{theorem}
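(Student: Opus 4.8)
The plan is to prove the claim in two stages: first, that the CBF-QP in \eqref{opt:cbf_qp} is feasible at every state $x=(\p,\vel)$ with $\p\in\Omega$; and second, that feasibility together with enforcement of the CBF inequality yields forward invariance of $\Omega$ by the standard comparison-lemma argument. The whole feasibility question reduces to showing that the coefficient multiplying the decision variable $u$ in \eqref{eq:cbf_constraint_all_degree2} never vanishes on $\Omega$, so that each CBF constraint defines a genuine (non-degenerate) half-space in $u$-space.

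First I would compute the Lie derivatives for the double integrator \eqref{eq:double-integrator}. Since each $h_i$ depends only on position, $\mathcal{L}_g h_i = 0$ (relative degree two) and the control enters only through $\mathcal{L}_g\mathcal{L}_f h_i = \nabla_\p h_i$, while $\mathcal{L}_f^2 h_i = \vel^\T \nabla_\p^2 h_i\,\vel$ and $\mathcal{L}_f h_i = \nabla_\p h_i^\T\vel$ collect the velocity-dependent drift. Thus \eqref{eq:cbf_constraint_all_degree2} is the affine-in-$u$ inequality $\nabla_\p h_i^\T u \ge b_i(\p,\vel)$ with a known scalar right-hand side $b_i$, and the coefficient of $u$ coincides with the CBF gradient \eqref{eq:cbf-grad}.

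The crux is the non-vanishing of $\nabla_\p h_i$ on $\Omega$. Using \eqref{eq:cbf-grad}, $\nabla_\p h_i = 2\phi_i(\p - y_i^*)$. On $\Omega$ we have $h_i(\p)\ge 0$, which by the CBF definition \eqref{eq:our-cbf} forces $\phi_i d_i \ge (r+\epsilon)^2 > 0$; since $d_i\ge 0$ is a squared distance, this is possible only when $\phi_i=+1$ and $d_i\ge (r+\epsilon)^2>0$. Hence $\p$ lies strictly outside the $i$-th ellipsoid, so $\p\ne y_i^*$ and $\nabla_\p h_i\ne\bzero$. Because the coefficient of $u$ is nonzero, each constraint is a nonempty half-space and, with $u$ unconstrained, a feasible $u$ exists; a convenient explicit witness is the stationary case $\vel=\bzero$, where $b_i = -\alpha\beta\,h_i \le 0$ and $u=\bzero$ satisfies every constraint at once.

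The step I expect to be the main obstacle is simultaneous feasibility of the full constraint set when $\vel\ne\bzero$: several ellipsoids may contribute constraints whose gradients $\nabla_\p h_i$ point in opposing directions, so I must rule out a Farkas-type certificate $\sum_i\lambda_i\nabla_\p h_i=\bzero$ with $\sum_i\lambda_i b_i>0$ and $\lambda_i\ge0$. I would argue that on $\Omega$ the robot centre sits at Euclidean distance at least $r+\epsilon$ from every primitive, so an unobstructed deceleration/escape direction always exists, and that the pruning of never-active constraints removes exactly the redundant opposing half-spaces, leaving a consistent system; the velocity terms in $b_i$ tighten only the constraints of obstacles being approached, which unbounded acceleration can satisfy. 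Finally, with feasibility established, I would invoke the forward-invariance theorem for higher-order CBFs \cite{ames2019cbf, wieland2007constructive}: the relative-degree-two construction in \eqref{eq:cbf_constraint_all_degree2} enforces $\dot h_i \ge -\alpha(h_i)$ along the closed loop, so the comparison lemma gives $h_i(\p(t))\ge0$ for all $t\ge0$ whenever $h_i(\p(0))\ge0$, i.e., $\Omega$ is forward invariant.
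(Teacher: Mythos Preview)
Your computation of the Lie derivatives and the non-vanishing of $\nabla_\p h_i$ on $\Omega$ are correct, and the observation that $u=\bzero$ works when $\vel=\bzero$ is the right special case. The gap is exactly where you flag it: simultaneous feasibility of all half-space constraints for general $\vel$. Your proposed fix---arguing via Farkas, pruning, or an ``unobstructed escape direction''---does not close the gap. Pruning presupposes a nonempty feasible region, so invoking it here is circular; and the fact that each half-space is individually nonempty (because its normal is nonzero) says nothing about their intersection, no matter how large $\|u\|$ is allowed to be.

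The paper resolves this with a single explicit feasible point that works for \emph{every} constraint at once, namely $u_0=-(\alpha+\beta)\vel$. Substituting $u_0$ into the half-space form of \eqref{eq:cbf_constraint_all_degree2} gives a left-hand side $2\phi_i(\p-y_i^\star)^\T\bigl(u_0+(\alpha+\beta)\vel\bigr)=0$, so feasibility reduces to showing the right-hand side $-\vel^\T\mathbf{H}_i\vel-\alpha\beta\, h_i$ is nonpositive on $\Omega$. Since $h_i\ge 0$ there, the missing ingredient is that the Hessian $\mathbf{H}_i=\nabla_\p^2 h_i$ in \eqref{eq:hessian} is positive semidefinite on the safe set: with $\phi_i=1$ and $\lambda^\star>0$ (because $\p$ lies strictly outside the ellipsoid), the diagonal block $\mathbf{I}-\diag\bigl(S_{jj}^2/(\lambda^\star+S_{jj}^2)\bigr)$ is positive definite, and the rank-one correction is of the form $a\,bb^\T$ with $a>0$. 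Hence $\vel^\T\mathbf{H}_i\vel\ge 0$, so $u_0$ satisfies every constraint simultaneously. This is the key lemma you are missing; once you have it, the Farkas and pruning discussion becomes unnecessary, and the forward-invariance conclusion follows exactly as you outline.
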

\begin{proof}
    Given the dynamical model \eqref{eq:double-integrator}, we compute the following Lie derivatives:
    \begin{equation*}
    \label{eq:Lie derivatives}
    \begin{split}
    \mathcal{L}_f h_i &= \left(\nabla_\p h_i \right)^T v\\
    \mathcal{L}_f^2 h_i &= f^T \left[\nabla_x^2 h_i \right] f + \left(\nabla_x h_i \right)^T \left[ \frac{\partial f}{\partial x} \right] f = v^T \mathbf{H}_i v\\
    \mathcal{L}_f\mathcal{L}_g h_i &= f^T \left[\nabla_x^2 h_i \right] g + \left(\nabla_x h_i \right)^T \left[ \frac{\partial f}{\partial x} \right] g\\
    &= \left(\nabla_\p h_i \right)^T = 2\phi_i (\p - y^*) .
    \end{split}
    \end{equation*}
    We can write \eqref{eq:cbf_constraint_all_degree2} as half-space constraints for every Gaussian:
    \begin{equation}
    \label{eq:cbf-halfspace}
        \begin{split}
    2 \phi_i (\p - y^*)^T (u + (\alpha + \beta) v) \geq & - v^T \mathbf{H}_i v - \alpha \beta h_i.
        \end{split}
    \end{equation}
    Note that for all safe states $x$ such that $h_i(x) > 0$, $\phi_i = 1$ and $\frac{S_{ii}^2}{\lambda + S_{ii}^2} < 1$. Moreover, $\frac{\partial \lambda}{\partial q} \frac{\partial y^*}{\partial \lambda} \frac{\partial q}{\partial \hat{\p}}$ is of the form $a B B^T$ where $a \in \mathbb{R}_{++}$, forming a PSD matrix. $\mathbf{H}_i$ is positive-definite, hence the RHS (\ref{eq:cbf-halfspace}) is always negative and $u_0 = -(\alpha + \beta) v$ is a feasible solution. 
\end{proof}
Having access to a closed-form feasible point circumvents the costly routine of finding interior points, which is a required step when pruning redundant polytope constraints. However, when the robot is in collision, the corresponding constraints (\ref{eq:cbf-halfspace}) are not necessarily satisfied with $u_0$. We observe that there are typically very few ellipsoids in collision at any one time. Therefore, although we maintain all in-collision constraints, we are free to prune constraints for ellipsoids that are not in collision. Thus, collision and collision-free scenarios have approximately the same computation savings, which amounts to roughly pruning $99.9\%$ of all Gaussians.

\begin{corollary}
\textcolor{black}{
The quadratic program (\ref{opt:cbf_qp}) can be extended to enforce velocity and acceleration limits.}
\end{corollary}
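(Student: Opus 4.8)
The plan is to handle the two limits by separate mechanisms, since in the double-integrator model \eqref{eq:double-integrator} the acceleration is the control input while the velocity is a state. For the acceleration limit, observe that $u = \accel$ is precisely the decision variable of \eqref{opt:cbf_qp}, so a bound such as $\norm{u}_\infty \leq a_{\max}$ (a box constraint) or $\norm{u}_2 \leq a_{\max}$ (a norm ball) is imposed directly on $u$. The former is affine and preserves the quadratic-program structure, while the latter adds a single convex quadratic constraint, keeping the problem a convex QCQP that is still solvable to global optimality. No new machinery is needed here beyond appending these rows to the existing constraint set.

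For the velocity limit, I would encode it as an auxiliary CBF rather than a direct constraint, since $\vel$ is not a decision variable. Because $\dot{\vel} = u$ under \eqref{eq:double-integrator}, the candidate $h_v(x) = v_{\max}^2 - \vel^T \vel$ has relative degree one with respect to $u$: its time derivative $\dot{h}_v = -2\vel^T u$ is already affine in the control. The associated forward-invariance condition $\dot{h}_v \geq -\alpha_v(h_v)$ then reduces to the half-space constraint $2\vel^T u \leq \alpha_v(v_{\max}^2 - \norm{\vel}_2^2)$, which can be appended to \eqref{opt:cbf_qp} exactly like the obstacle constraints \eqref{eq:cbf-halfspace} without leaving the convex-QP class. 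By the same $0$-superlevel-set argument used in \cref{sec:cbf}, enforcing this constraint renders the velocity-bounded set $\{x : \norm{\vel}_2 \leq v_{\max}\}$ forward invariant, so the velocity bound is respected for all time once it holds initially.

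The main obstacle is joint feasibility: the obstacle-avoidance constraints \eqref{eq:cbf-halfspace}, the velocity CBF constraint, and the acceleration bound need not admit a common $u$. The closed-form feasible point $u_0 = -(\alpha+\beta)\vel$ from the preceding theorem certifies the obstacle constraints, but it may violate an acceleration bound when $\vel$ is large, and the braking direction demanded by an active velocity constraint can conflict with the direction required to avoid an obstacle. I would resolve this by softening the velocity and acceleration limits with nonnegative slack variables penalized heavily in the objective, so that the hard safety guarantee of the original theorem is preserved while the additional limits are enforced whenever they are dynamically compatible; alternatively, one can select the class-$\kappa$ gains so that the velocity CBF is strictly less aggressive than the obstacle CBF, yielding a nonempty intersection of the half-spaces near the boundary. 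Showing that one of these prioritization schemes keeps the QP feasible, and that safety is never sacrificed merely to meet a velocity or acceleration limit, is the crux of the argument.
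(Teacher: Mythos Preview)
Your setup mirrors the paper's: encode the velocity bound via the relative-degree-one CBF $h_v(x)=v_{\max}^2-\vel^\T\vel$ and impose the acceleration bound directly on $u$. Where you diverge is at the step you yourself flag as ``the crux'': you leave joint feasibility unresolved, proposing slack variables or unspecified gain tuning. That is the gap.

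The paper closes it cleanly, without slacks, by reusing the same feasible point $u_0=-(\alpha+\beta)\vel$ from the preceding theorem. For the velocity CBF, $\dot h_v(x,u_0)=-2\vel^\T u_0=2(\alpha+\beta)\lVert\vel\rVert^2\geq 0$, so the condition $\dot h_v\geq-\kappa h_v$ holds for any $\kappa>0$ whenever $h_v(x)\geq 0$; hence $u_0$ always satisfies the velocity constraint once the velocity set is entered. For the acceleration bound, $\lVert u_0\rVert=(\alpha+\beta)\lVert\vel\rVert\leq(\alpha+\beta)v_{\max}$ by forward invariance of the velocity set, so choosing $\alpha+\beta\leq a_{\max}/v_{\max}$ guarantees $\lVert u_0\rVert\leq a_{\max}$. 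Thus the single point $u_0$ simultaneously satisfies \eqref{eq:cbf-halfspace}, the velocity CBF half-space, and the hard acceleration ball, and the augmented QP is always feasible with all constraints kept hard. Your slack-variable route would work but sacrifices the hard guarantee unnecessarily; the actual argument is a two-line computation you were one step away from.
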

\begin{proof}
\textcolor{black}{
A velocity barrier ($h_v(x) = v_{max}^2 - \lVert v \rVert^2 \geq 0$) constraint evaluated at $u_0$ for any $\kappa \in \R_{++}$ is 
\begin{equation}
\label{eq:vel_barrier}
    \underbrace{-2 v^T u_0}_{\dot{h}(x, u_0)} \geq -\kappa \cdot  h_v(x) \leftrightarrow \left(\kappa - 2(\alpha + \beta)\right) \lVert v \rVert^2 \leq \kappa v_{max}^2. 
\end{equation}
Note that the RHS of (\ref{eq:vel_barrier}) is always satisfied when $(\kappa, \alpha, \beta) > \mb{0}$ and $h_v(x) \geq 0$, hence the velocity barrier is forward invariant.
Consequently, given the same feasible solution and a desired limit on the acceleration $\lVert u \rVert^2 \leq a_{max}^2$, the following inequality must be satisfied
\begin{equation}
    \label{eq:control_limits}
    \lVert u_0 \rVert^2 = (\alpha + \beta)^2 \lVert v \rVert^2 \leq a_{max}^2  \; \forall \; \lVert v \rVert \leq v_{max},
\end{equation} 
hence $\alpha + \beta \leq \frac{a_{max}}{v_{max}}$.
Therefore, the inclusion of velocity and acceleration limits into (\ref{opt:cbf_qp}) does not compromise the feasibility or safety of the program since $u_0$ satisfies (\ref{eq:cbf-halfspace}, \ref{eq:vel_barrier}, and \ref{eq:control_limits}) so long as $\alpha, \beta$ are chosen accordingly. }
\end{proof}

\ifbool{condensed_paper}
{
}
{
If the state is unsafe, we can informally characterize unsafe regions into categories of steerable and unsteerable. Steerable regions correspond to positions where any control actions exist in the CBF action polytope, but the polytope no longer contains the origin. On the other hand, unsteerable regions are positions where two or more half-space intersections yield the empty set. Steerable regions occur as the robot transitions from free space to occupied. As an example, the robot begins to penetrate the surface Gaussians of a wall, and so a cluster of half-spaces begin closing in toward and past the origin. If these half-spaces close in too far, the empty set arises from the intersection. This scenario would correspond to when the robot has penetrated too far and finds itself stuck between Gaussians in the wall and those on the surface.

\textcolor{red}{I think we can make this subsection more concise.}

As the scene changes, and the Gaussian positions and extents vary with time, the CBF solver may find itself in either steerable or unsteerable regimes regardless of the state the robot was in before. If the robot finds itself in a steerable region, then the solver will be able to find a control action to move out into free space. However, if the robot is unsteerable (e.g. Gaussians have enveloped the robot), this behavior is typically an artifact of the scene representation and not necessarily indicative of a collision with the ground-truth geometry. More specifically, these artifacts could be ``floaters'' or simply where the scene is poorly supervised. Fortunately, these artifacts do subside as sensor data streams in and the GSplat continues to train. To discourage these artifacts from hurting controller performance a priori, the buffer distance $\epsilon$ can be increased to reflect the maximum distance Gaussians may change between consecutive solves of the CBF.
}

\section{Experiments}
\label{sec:results}
\begin{figure}
    \centering
    \includegraphics[trim={1cm 5.5cm 0cm 0cm}, width=\columnwidth]{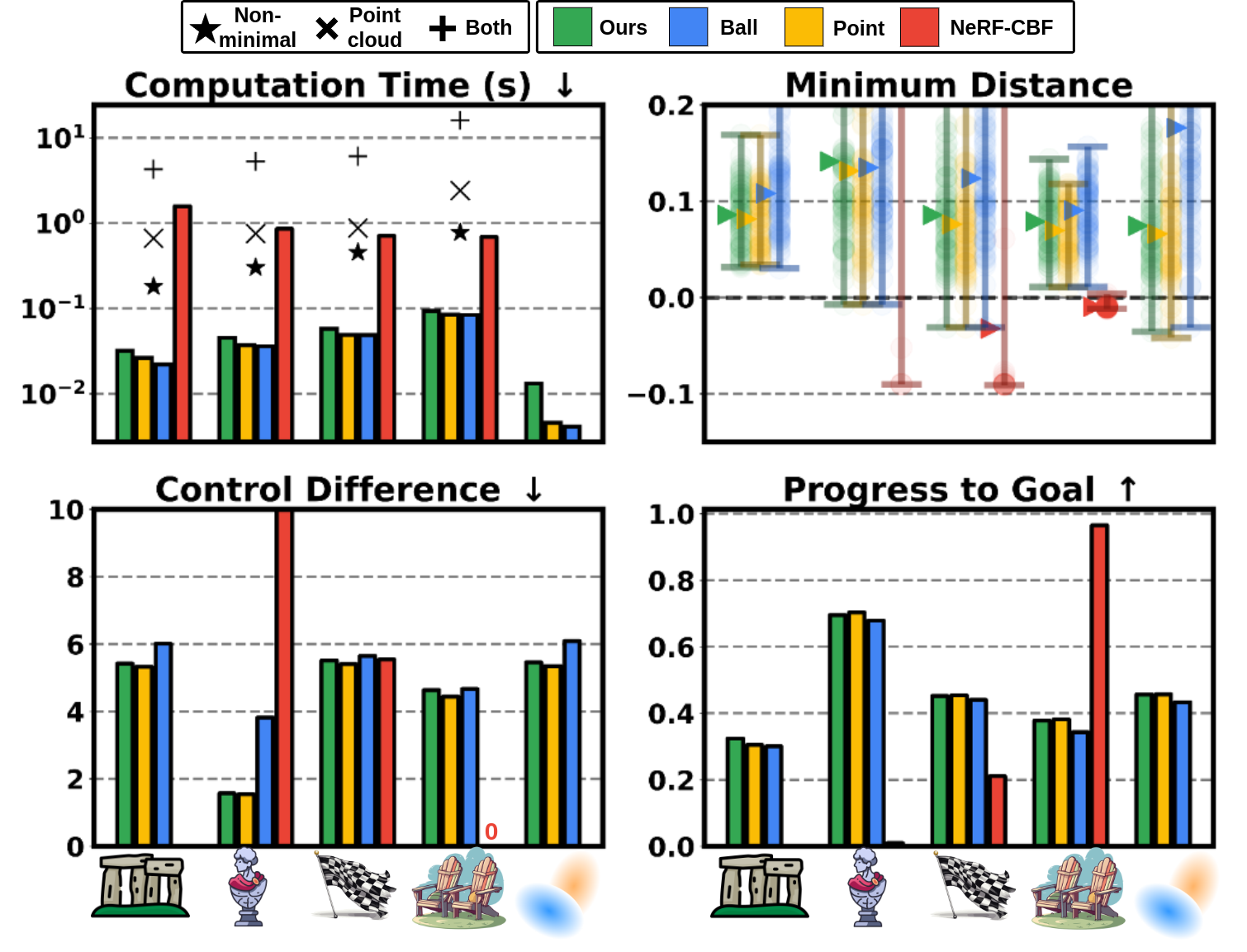}
    \caption{Computation time (lower is better), minimum distance to collision (above zero is safe), magnitude of control effort correction (lower is better), and progress to goal (higher is better) for five different scenes indicated by the icons. SAFER-Splat (ours, green), is compared with our CBF on a conservative bounding sphere of each ellipsoid (blue), our CBF on the GSplat means (yellow), and NeRF-CBF \cite{tong2023enforcing} (red). 
    Each trajectory is visualized as a translucent dot in the minimum distance and the wedge indicates the mean.
    }
    \vspace{-1em}
    \label{fig:sim-results}
\end{figure} 

\noindent \textbf{Simulation Results.}
We compare SAFER-Splat (green) to NeRF-CBF \cite{tong2023enforcing} (red), in addition two variants of our method: one which uses a conservative bounding sphere of each ellipsoid (blue) and one that uses the GSplat means (yellow) in \cref{fig:sim-results}. A Mahalanobis distance variant was also tested but proved too ill-behaved to show. NeRF-CBF is trained using Nerfacto \cite{nerfstudio} on the same dataset as the GSplat. We examine each method in five scenes: \emph{Stonehenge} (100K Gaussians), \emph{Statues} (200K), and \emph{Flightgate} (300K), \emph{Adirondacks} (500K), and \emph{low-res Flightgate} (15K).
All controllers are executed for $100$ trajectories. Our method strikes a favorable trade-off compared to other methods.

The computation time of SAFER-Splat and its variants are similar and more than an order of magnitude faster than NeRF-CBF. We also plot timing results without constraint pruning ($\star$), on a point cloud (about 6$\times$ more points than ellipsoids) represented by $\times$, and a combination of both ($+$), demonstrating that our real-time performance is attributed to both constraint pruning and the sparse GSplat representation. 

To assess safety, we compute the squared distance of the trajectories to the high-res GSplat \eqref{eq:our-cbf} (instead of COLMAP due to the better dense reconstruction from the GSplat). NeRF-CBF attains major violations (data points less than 0) compared to our method and its variants. Our method and its variants can become unsafe due to dynamics noise, but infrequently. 
We expect that in sparse reconstructions with large, elongated Gaussians, e.g., in the early stages of GSplat mapping, our method strikes a promising tradeoff between the conservativeness of the ball variant 
and the high potential for collision of the means-only variant. 
We see this behavior beginning to appear in \emph{low-res Flightgate} (15K Gaussians). 

We utilize the objective of \ref{opt:cbf_qp} (Control Difference) and progress to the goal ($ \min_t ||\p(t) - \p_0|| / ||\p_f - \p_0||$) as proxies for conservativeness. We do not expect the progress to be 1 because all methods are reactive and utilize a simple PD controller ($\bar{u}$) to navigate to the goal. We observe that NeRF-CBF is sporadic in its performance, failing completely on \emph{Stonehenge} and yielding only a few valid trajectories in \emph{Statues} and \emph{Flightgate}. Again, our  method and the point variant achieve similar results, which are slightly less conservative than the ball variant. 

\begin{figure}[]
    \centering
    \includegraphics[trim={0cm 2cm 0cm 0cm}, width=.99\columnwidth]{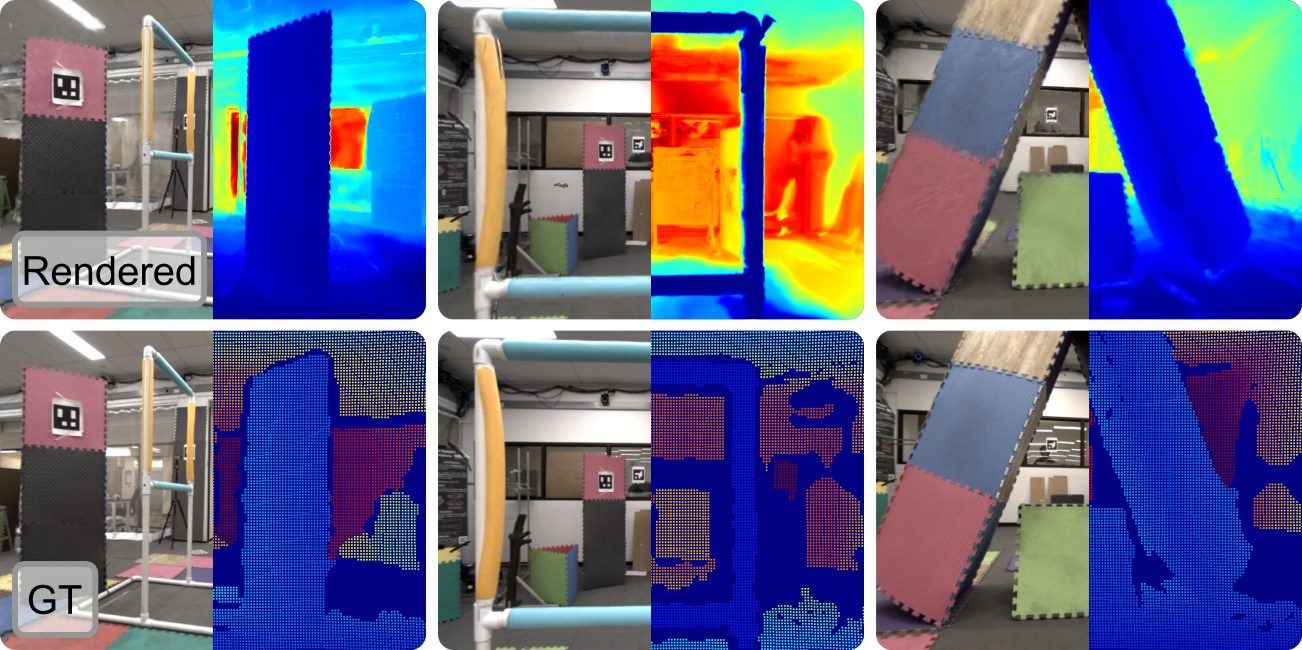}
    \caption{Top row: Overlaid color and depth renders from the GSplats produced using SAFER-Splat in the real-world experiments. Bottom row: Overlaid corresponding keyframe color image and sparse point cloud.}
    \label{fig:splat-results}
    \vspace{-1em}
\end{figure}

\begin{figure*}[h]
    \centering
    \includegraphics[trim={0cm 2cm 0cm 0cm}, width=.99\textwidth]{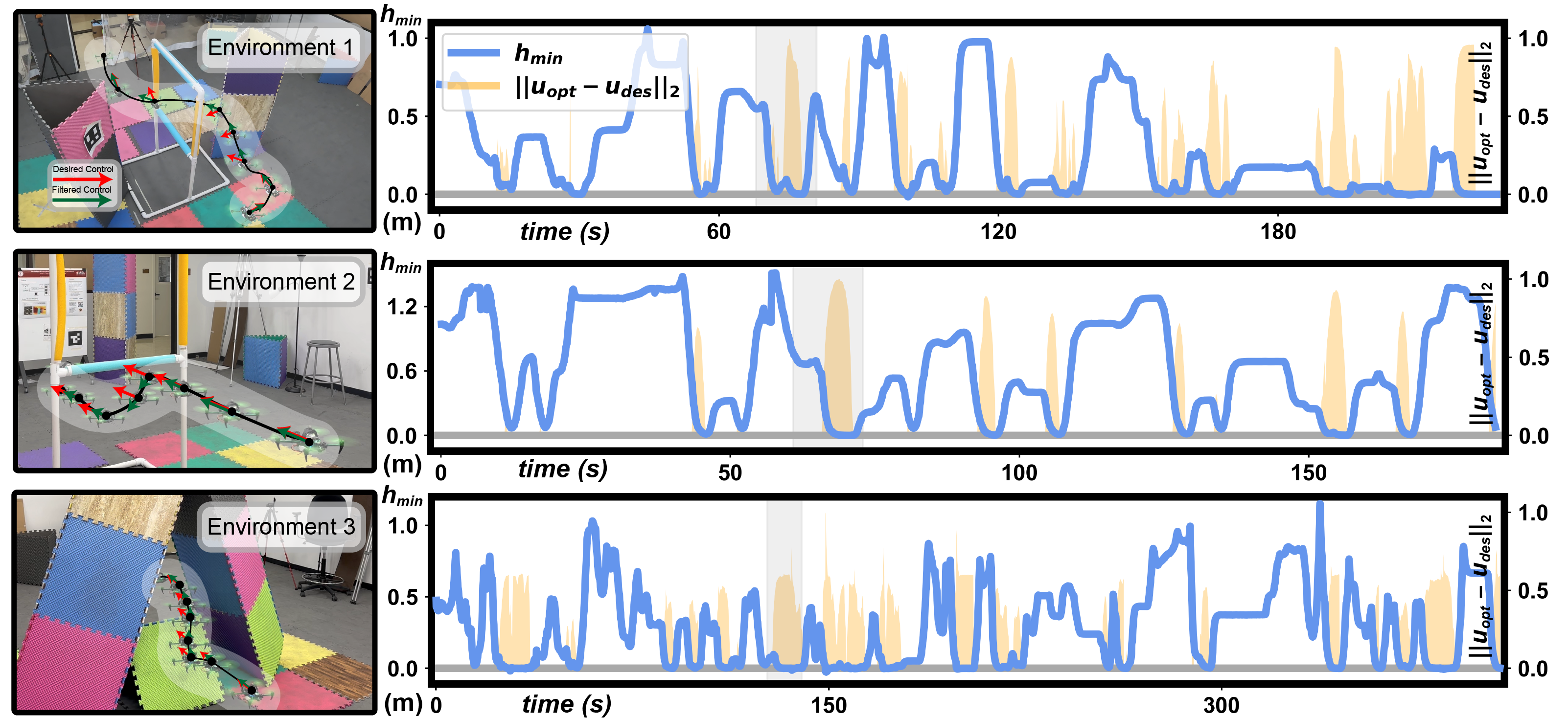}
    \caption{We showcase SAFER-Splat across three real-world scenes. A snapshot of the drone trajectory is shown on the left panel for each of the three scenes, annotated with the approximate desired and filtered control directions. In each of these trajectories, the operator attempted to directly hit the obstacle. On the right, we show the full flight trajectory, showing robustness with repeated collision avoidance. The pictured portion of each trajectory is highlighted in grey.}
    \label{fig:exp-results}
    \vspace{-1em}
\end{figure*}

\smallskip
\noindent \textbf{Hardware Platform.} Our experiments are executed on a 220 mm Starling 2 drone. The platform maintains a pose estimate using visual inertial odometry and tracks position, velocity, and acceleration commands with a low-level PX4 controller. Images, point clouds, and VIO poses are sent through ROS2 \cite{macenski2022robot} to an off-board desktop computer with RTX 4090 GPU via 5 GHz WiFi. The high-level drone acceleration commands come from a teleoperator and are relayed to the off-board computer which transmits CBF corrected commands back to the drone's low-level controller. SAFER-Splat's ROS nodes are made available in our codebase. Note that the scenes used in the real-world experiments contain visual fiducials and motion capture cameras, but these are \emph{not} used by our VIO or GSplat mapping modules for pose estimation.





\smallskip
\noindent \textbf{SplatBridge.} The incoming stream of RGB images, time-of-flight point clouds, and estimated poses are used to train a GSplat in real-time. We utilize an extension of NerfBridge \cite{yu2023nerfbridge}, called SplatBridge, to interface with the GSplat implementation in Nerfstudio \cite{nerfstudio}. At every keyframe, SplatBridge loads the image into a running buffer and seeds Gaussian primitives from the ToF point-cloud. Asynchronous to the sensor stream, SplatBridge optimizes the Gaussian attributes and keyframe camera poses, initialized at the VIO estimates, minimizing the loss proposed by \cite{kerbl3Dgaussians}. The SAFER-Splat safety layer updates its reference to the Gaussian primitives from SplatBridge before each query to the CBF.


\smallskip
\noindent \textbf{Hardware Experiments.}
In three real-world experiments, SAFER-Splat successfully enabled safe teleoperation even when the operator repeatedly attempted to force a collision by driving the drone towards obstacles. Fig. \ref{fig:exp-results} shows our sample trajectories from each scene along with the full plots of our safety value versus time, demonstrating that the safety layer avoids collision ($h_{min} > 0$) and only turns on close to the boundary (i.e., minimally invasive). Our method performed online optimization on the parameters of between 150,000 and 300,000 Gaussians in each scene while simultaneously computing safety filtered actions at 15Hz. We constructed our GSplat maps entirely while the drone was flying, and enabled our CBF after a brief warm-up period (5-30 seconds of flight). During flights in Environments 1 and 3, there were brief violations of the CBF, $h_{min} < 0$, in all cases these violations were on the order of 1-4mm and due to minor failures of the drone's low-level controller in accurately tracking the safety layer's outputs. Likely these tracking failures were caused by rotor induced aerodynamic effects when close to obstacles, and in no cases did they result in loss of control of the drone. Full recordings of each of these real-world flights are available on our project's website.
Fig. \ref{fig:splat-results} showcases the high visual accuracy of each environment's GSplat when trained via SplatBridge.

\section{Conclusion, Limitations, and Future Work}
\label{sec:conclusion}
We propose a real-time 3D reconstruction and safety filter pipeline using GSplats, which are photo-realistic, efficient to optimize, and interpretable. The proposed safety layer utilizes a CBF formulated for ellipsoidal robot geometries, and leverages the favorable properties of the GSplat representation for scalable and rigorous safety guarantees. SAFER-Splat is generalizable to arbitrary high-level planners, minimally-invasive, and well-behaved, resulting in smooth robot motion. The proposed safety layer is coupled with an online GSplat mapping module to enable safe navigation in unknown environments. 
In simulations, we show that SAFER-Splat out-performs comparable methods based on NeRFs, and in real-world experiments, we demonstrate that SAFER-Splat can simultaneously optimize a 3D reconstruction and perform safety filtering for a teleoperated drone.

SAFER-Splat assumes that the underlying GSplat map is relatively accurate with respect to the real-world, which does not hold in some situations, especially with dynamic objects.
Future work will seek to improve SAFER-Splat in these cases by accounting for dynamic objects in an uncertainty-aware optimization framework. Second, we hope to expand SAFER-Splat beyond double-integrator systems, e.g., contact-rich dynamical systems. Next, SAFER-Splat can be deployed on lower-end hardware as GSplat maps become more lightweight \cite{shorinwa2024fast}. Furthermore, SplatBridge can be made more robust to inaccurate camera pose estimates, which affect the accuracy of the GSplat. Finally, we seek to extend SAFER-Splat to semantic multi-robot mapping \cite{yu2025hammer, shorinwa2025siren}, semantic-aware safety, and simultaneous deployment with high-level planners like Vision-Language-Action models. 


\bibliographystyle{IEEEtran.bst}
\bibliography{citations.bib}

\end{document}